\documentclass{article}

\usepackage[preprint]{neurips_2026}
\usepackage{amsmath,amssymb,amsthm}
\usepackage{booktabs}
\usepackage{graphicx}
\usepackage{xcolor}
\usepackage{colortbl}
\usepackage{array}
\usepackage{algorithm}
\usepackage{algorithmic}
\usepackage{multirow}
\usepackage{hyperref}
\usepackage{natbib}
\usepackage{microtype}
\usepackage{geometry}

\graphicspath{{figures/}}

\definecolor{tableheader}{RGB}{46,134,171}
\definecolor{tablerowalt}{RGB}{245,248,250}
\definecolor{bestresult}{RGB}{46,134,171}

\newtheorem{theorem}{Theorem}

\newtheorem{proposition}[theorem]{Proposition}
\newtheorem{definition}{Definition}

\usepackage[utf8]{inputenc} 
\usepackage[T1]{fontenc}    
\usepackage{hyperref}       
\usepackage{url}            
\usepackage{booktabs}       
\usepackage{amsfonts}       
\usepackage{nicefrac}       
\usepackage{microtype}      
\usepackage{xcolor}         

\title{DAIN: Dynamic Agent-Based Interaction Network for Efficient and Collaborative Multimodal Reasoning}

\author{%
Xinxin Chen \quad Yuchen Li \quad Zihan Wang \\
Haoyu Zhang \quad Ruixin Liu \quad Mingyuan Zhao \\
University of Chinese Academy of Sciences
}

\begin{document}

\maketitle

\begin{abstract}
Current multimodal fusion approaches, particularly those based on static Mixture-of-Experts (MoE) architectures, often struggle to provide the adaptive and efficient collaborative reasoning required by complex real-world applications. We introduce the Dynamic Agent-based Interaction Network (DAIN), which reconceptualizes multimodal fusion as a dynamic, multi-agent collaborative process. DAIN employs a context-aware Meta-Controller that dynamically schedules sparse activation of specialized interaction agents and orchestrates compressed inter-agent communication for consensus-building. The framework is guided by a multi-objective loss function that jointly optimizes task accuracy, agent specialization, and operational efficiency through sparse activation and communication regularization. Comprehensive evaluations across five diverse benchmarks---ADNI, MIMIC-IV, MM-IMDB, CMU-MOSI, and ENRICO---establish DAIN as a new state-of-the-art, delivering significant performance improvements including a 2.6\% accuracy gain on ADNI. Ablation studies verify the critical roles of both dynamic scheduling and agent communication. Furthermore, DAIN offers enhanced interpretability by exposing context-dependent agent roles and collaboration patterns while maintaining computational efficiency through sample-wise sparse agent activation. Our work demonstrates the promise of dynamic, agent-based paradigms for multimodal reasoning.
\end{abstract}

\begin{figure*}
    \centering
    \includegraphics[width=1\linewidth]{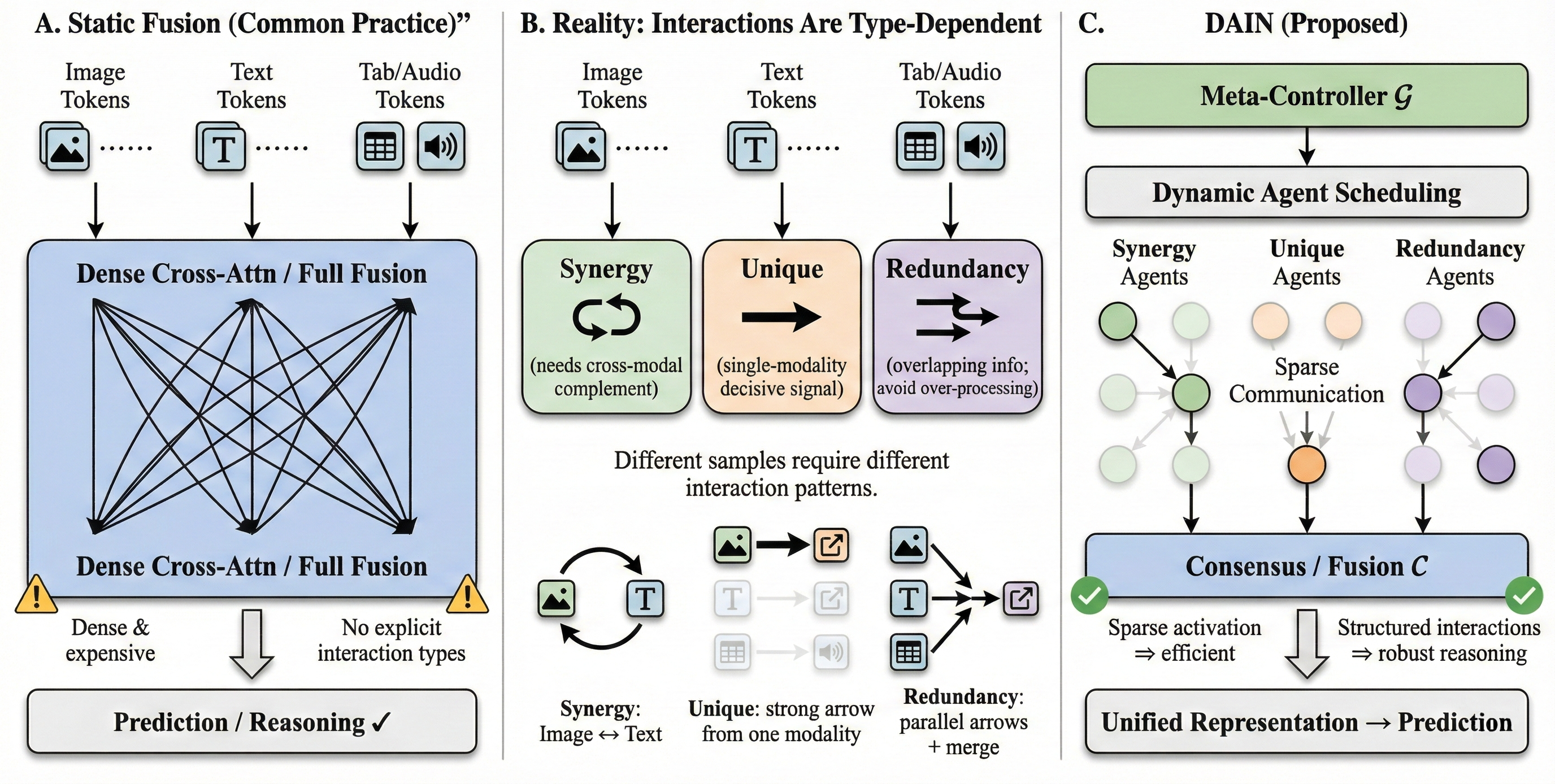}
    \caption{\textbf{Motivation.} Static dense multimodal fusion is costly and ignores diverse cross-modal interaction patterns; DAIN models \emph{synergy}, \emph{uniqueness}, and \emph{redundancy} via dynamic, sparse agent collaboration for efficient and robust reasoning.}
    \label{fig:motivation}
\end{figure*}

\section{Introduction}
\label{sec:intro}

Effective integration of complementary information from distinct data sources---such as images, text, and audio---remains a fundamental challenge in multimodal machine learning. This integration is particularly critical in domains like healthcare, where accurate diagnostics often depend on synthesizing heterogeneous data streams including medical imaging, clinical notes, and laboratory results. The Partial Information Decomposition (PID) framework \citep{liang2023quantifying} provides a principled theoretical lens for analyzing multimodal interactions, decomposing information into uniqueness (modality-specific contributions), redundancy (shared information), and synergy (emergent properties from combination). While PID establishes a strong theoretical foundation, its integration into practical end-to-end deep learning models remains underexplored, as existing fusion methods largely lack explicit mechanisms to model these distinct interaction types.

Mixture-of-Experts (MoE) architectures \citep{jacobs1991adaptive, jordan1994hierarchical} offer a promising direction by allocating specialized parameters to different interaction patterns through conditional computation. Recent advances have scaled MoE to unprecedented capacities \citep{shazeer2017outrageously, fedus2022switch}, and several works have adapted this paradigm for multimodal tasks \citep{mustafa2022multimodal, lin2024moe, yu2024mmoe}. However, conventional MoE frameworks operate with static, predefined collaboration structures that lack dynamic adaptation and resource-efficient coordination. They aggregate expert outputs without mechanisms to dynamically assemble the most relevant experts or facilitate focused, context-aware information exchange among them.

To address these limitations, we introduce a paradigm shift: we conceptualize multimodal fusion as a collaborative decision-making process among autonomous interaction agents. We propose the Dynamic Agent-based Interaction Network (DAIN), which introduces three key innovations. First, DAIN employs a Meta-Controller that performs context-aware, sparse scheduling of agents, activating only a relevant subset for each input sample. Second, the framework enables structured, compressed inter-agent communication guided by a dynamic graph to efficiently build consensus among active agents. Third, DAIN uses a multi-objective optimization formulation that balances task performance with operational efficiency, promoting sparsity in both agent activation and communication pathways.

This dynamic, agent-based framework is backbone-agnostic and provides structured interpretability through agent activation profiles and communication graphs. Extensive experiments on five diverse multimodal datasets---including two medical benchmarks (ADNI, MIMIC-IV) and three general-purpose benchmarks (MM-IMDB, CMU-MOSI, ENRICO)---demonstrate that DAIN consistently outperforms strong static fusion baselines and prior interaction-aware models (Figure~\ref{fig:overview}). Ablation studies confirm the necessity of both dynamic scheduling and agent communication, while analysis of agent activation patterns reveals interpretable insights about how different interaction types are dynamically recruited for distinct data characteristics.

\section{Related Work}
\label{sec:related}

\subsection{Multimodal Interaction Theory}

The theoretical study of multimodal interactions has been substantially advanced by the Partial Information Decomposition (PID) framework \citep{liang2023quantifying}, which decomposes information content into uniqueness, redundancy, and synergy components. However, translating these theoretical constructs into end-to-end learning systems remains challenging. Several approaches have attempted to model specific interaction types \citep{zhang2023multimodal, kim2023missing}, but these are often limited to particular modality combinations or require separate estimation procedures. Other works focus on interaction quantification \citep{wortwein2024multimodal, long2024multimodal, dufumier2024multimodal} but do not provide mechanisms for learning these interactions within unified architectures. Bimodal approaches \citep{wortwein2022beyond, fan2024multimodal} cannot readily extend to settings with more than two modalities. Our work addresses these gaps by proposing a unified framework that directly models and quantifies diverse interaction types within an end-to-end Mixture-of-Experts architecture.

\subsection{Multimodal Fusion Architectures}

Multimodal fusion strategies have evolved from simple concatenation of modality representations \citep{liu2018efficient} to sophisticated attention-based mechanisms \citep{tsai2019multimodal, xue2023dynamic, jin2025multimodal}. The Mixture-of-Experts paradigm \citep{jacobs1991adaptive, jordan1994hierarchical, yuksel2012twenty, chen1999improved} enables conditional computation where specialized parameters are activated based on input characteristics. Modern large-scale implementations \citep{shazeer2017outrageously, fedus2022switch, zhang2025mixture} have demonstrated the scalability of sparse expert activation. Recent adaptations for multimodal learning include LIMoE \citep{mustafa2022multimodal}, which applies MoE to vision-language contrastive learning, MoE-LLaVA \citep{lin2024moe}, which integrates MoE into large vision-language models, and comprehensive surveys on multimodal LLMs \citep{liang2024comprehensive}. The MMoE framework \citep{yu2024mmoe} explicitly models different interaction types using specialized experts, though it treats interaction modeling as a separate preprocessing stage rather than an integrated end-to-end learning component. Our DAIN framework advances beyond these approaches by introducing dynamic, context-aware expert scheduling and structured inter-agent communication.\cite{zhang2026memmark, chen2025r2i, chen2026mvibench, you2026drdgrl, zhao2026stride, dou2026dsadf, dou2025plan, dou2026core, zhao2026stride, huang2026gui}

\subsection{Multimodal Interpretability}

Interpretability in multimodal systems aims to explain model decisions and quantify the contribution of different modalities and their interactions. Prior approaches have focused on attention visualization \citep{chefer2021transformer, chefer2021generic} or individual modality attribution \citep{ismail2022improving, ghosh2023exploiting, swamy2024interpretable}. Some methods generate human-readable rationales \citep{park2018multimodal, zadeh2018multimodal, dominici2023shapley} but do not quantify the relative contribution of interaction types. Others lack structured taxonomies for categorizing interactions \citep{tsai2020learning, liang2022multibench, wenderoth2024explainable}. DAIN addresses these limitations by providing interpretability through explicit agent activation profiles and communication graphs, enabling both local (per-sample) and global (dataset-level) analysis of interaction dynamics.\cite{gao2025exploring,wen2025devil,li2025treble,lab2025safework}

\begin{figure*}
    \centering
    \includegraphics[width=1\linewidth]{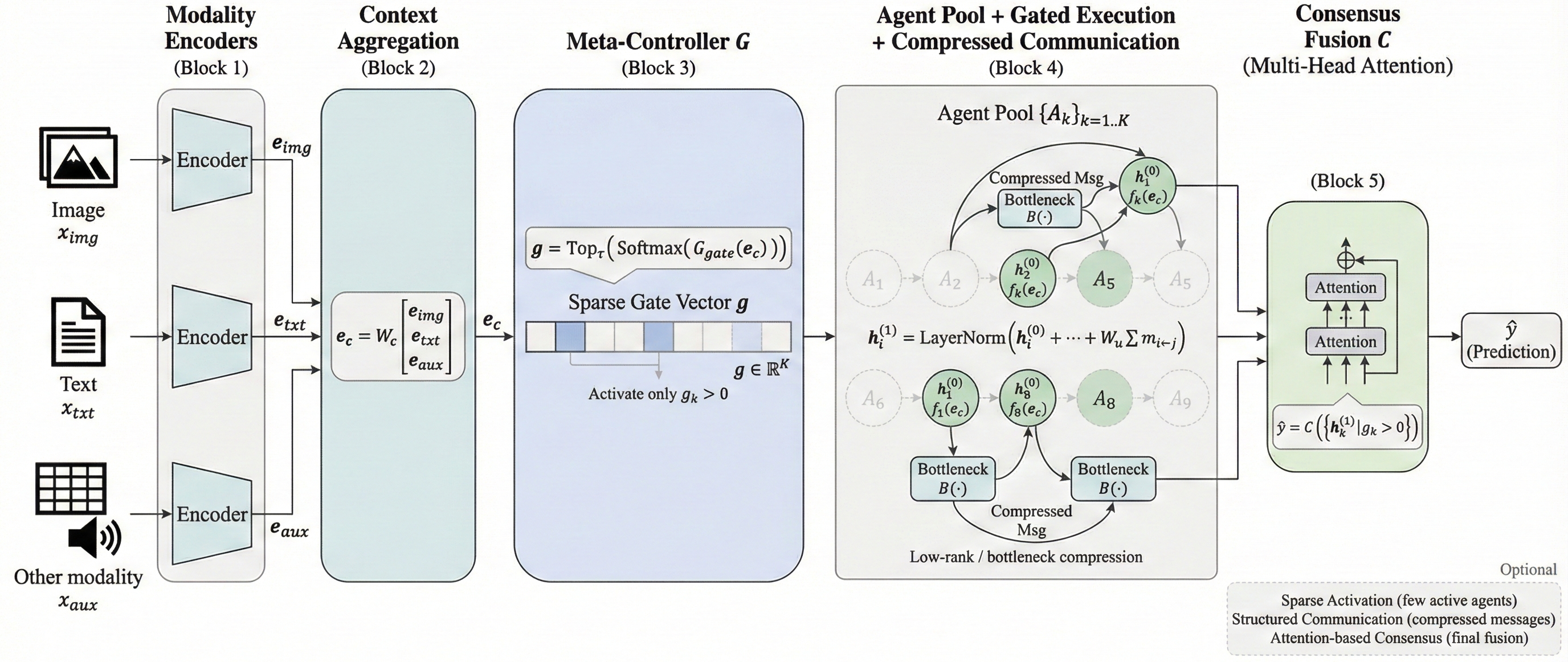}
    \caption{\textbf{DAIN architecture.} Encoded multimodal features form a context vector for a meta-controller that sparsely schedules interaction agents; active agents exchange compressed messages and are aggregated by an attention-based consensus module to produce the final prediction.}
    \label{fig:overview}
\end{figure*}

\section{Method}
\label{sec:method}

\subsection{Problem Formulation and Notation}

Consider a multimodal input consisting of $n$ modalities $\mathcal{M} = \{\mathbf{m}_1, \mathbf{m}_2, \ldots, \mathbf{m}_n\}$, where each modality $\mathbf{m}_i$ is processed by an encoder $\mathcal{E}_i$ to produce embedding $\mathbf{e}_i$ \citep{yao2025multimodal}. The set of embeddings is denoted $\mathcal{L} = \{\mathbf{e}_1, \ldots, \mathbf{e}_n\}$. We define a pool of $K$ interaction agents $\mathcal{A} = \{A_1, A_2, \ldots, A_K\}$, where each agent $A_k$ is specialized to model a particular type of multimodal interaction (e.g., synergy between modalities, unique information from a single modality, or redundant information shared across modalities).\cite{tang2022few,liu2023spts,tang2022optimal,feng2024docpedia,zhao2024multi,zhao2024harmonizing,wang2025pargo,tang2023character,sun2025attentive,lu2024bounding,zhao2025tabpedia,tang2024mtvqa,tang2024textsquare,shan2024mctbench,feng2023unidoc,tang2022youcan,fu2024ocrbench,guo2025seed1,wang2025vision,wang2025wilddoc,feng2025dolphin,lu2025prolonged,fei2025advancing}

\begin{definition}[Dynamic Agent-based Fusion]
Given multimodal embeddings $\mathcal{L}$, dynamic agent-based fusion produces a prediction $\hat{\mathbf{y}}$ through context-aware agent scheduling, gated execution with inter-agent communication, and consensus fusion, parameterized by the Meta-Controller $G$ and Consensus module $C$.
\end{definition}

\subsection{DAIN Architecture}

The DAIN framework operates in three sequential stages: Context Analysis \& Agent Scheduling, Gated Execution \& Compressed Communication, and Consensus Fusion. The pool of $K$ agents is organized into three functional categories based on the PID framework: \emph{Synergy agents} that capture emergent cross-modal interactions, \emph{Uniqueness agents} that extract modality-specific information, and \emph{Redundancy agents} that model shared information across modalities. In our implementation with $K=8$, we allocate 3 synergy agents, 3 uniqueness agents (one per primary modality), and 2 redundancy agents. This allocation is flexible and can be adjusted based on task requirements.

\subsubsection{Context Analysis and Dynamic Agent Scheduling}

The Meta-Controller $G$ first computes a context vector $\mathbf{e}_c$ by aggregating the modality embeddings through a lightweight pooling operation, such as concatenation followed by linear projection:
\begin{equation}
    \mathbf{e}_c = W_c [\mathbf{e}_1; \mathbf{e}_2; \ldots; \mathbf{e}_n] + \mathbf{b}_c,
\end{equation}
where $[\cdot;\cdot]$ denotes concatenation, and $W_c$, $\mathbf{b}_c$ are learnable parameters.

Based on $\mathbf{e}_c$, the Meta-Controller generates a sparse activation vector $\mathbf{g} \in \mathbb{R}^K$ for the agent pool:
\begin{equation}
    \mathbf{g} = \text{Top}_\tau(\text{Softmax}(G_{\text{gate}}(\mathbf{e}_c))),
    \label{eq:gating}
\end{equation}
where $G_{\text{gate}}$ is a learned sub-network and $\text{Top}_\tau(\cdot)$ is a differentiable sparsification function (e.g., sparsemax \citep{martins2016softmax}) that retains only the top-$\tau$ agents with non-zero gates. An agent $A_k$ is executed if and only if $g_k > 0$.

Simultaneously, the Meta-Controller infers a soft communication graph $\mathbf{\Phi} \in \mathbb{R}^{K \times K}$ that governs inter-agent information flow:
\begin{equation}
    \mathbf{\Phi} = \sigma(G_{\text{comm}}(\mathbf{e}_c)),
    \label{eq:comm_graph}
\end{equation}
where $\sigma$ is the sigmoid function and $\Phi_{ij}$ indicates the communication strength from agent $A_j$ to $A_i$. This communication graph is context-dependent, enabling flexible collaboration patterns that adapt to input characteristics.

From a structural systems perspective, the effectiveness and robustness of collaborative reasoning critically depend on the underlying communication topology. Extensive studies on the diagnosability of interconnection networks demonstrate that appropriate connectivity and neighborhood structures enable reliable information propagation and fault localization in distributed systems \citep{wang2017g,wang2024diagnosability}. In particular, the concept of g-good-neighbor diagnosability highlights how local redundancy and constrained neighborhood interactions can significantly enhance system-level robustness under component failures or unreliable communications \citep{xiang2025g}. These results provide theoretical motivation for learning a structured, context-dependent inter-agent communication graph in DAIN, where controlled connectivity and sparse yet reliable message passing support stable and effective collaborative reasoning.

From a structural-systems perspective, the effectiveness of information propagation and robust consensus often depends on the underlying connectivity and diagnosability properties of the interaction topology. Classical results on global reliable diagnosis and g-good-neighbor diagnosability highlight how structured communication patterns enable reliable fault localization and system-level robustness, which conceptually aligns with our goal of learning a context-dependent inter-agent communication graph for collaborative reasoning \citep{wang2025global,xiang2025g,wang2020connectivity,wang2019note}.
\subsubsection{Gated Execution and Compressed Communication}

Each activated agent ($g_k > 0$) first processes the multimodal context to produce an initial state:
\begin{equation}
    \mathbf{h}_k^{(0)} = f_k(\mathbf{e}_c; \theta_k), \quad \text{for } k \text{ where } g_k > 0,
\end{equation}
where $f_k$ is the agent-specific processing function with parameters $\theta_k$.

Agents then engage in structured communication to refine their states. For receiving agent $A_i$, messages from other agents $A_j$ are compressed and aggregated:
\begin{equation}
    \mathbf{m}_{i \leftarrow j} = B(\mathbf{h}_j^{(0)}; \phi_{ij}), \quad \phi_{ij} = \Phi_{ij} \cdot \mathbb{I}(g_j > 0),
\end{equation}
where $B(\cdot; \phi_{ij})$ is a communication bottleneck that outputs compressed messages with complexity modulated by the connection strength $\phi_{ij}$. Strong connections ($\phi_{ij} \approx 1$) permit richer information transfer, while weak connections convey minimal information. A simple instantiation is $B(\mathbf{h}; \phi) = \phi \cdot (W_B \mathbf{h})$, where $W_B$ is a low-rank projection matrix.

Agent $A_i$ updates its state by integrating incoming messages:
\begin{equation}
    \mathbf{h}_i^{(1)} = \text{LayerNorm}\left(\mathbf{h}_i^{(0)} + \text{ReLU}\left(W_u \sum_{j \neq i} \mathbf{m}_{i \leftarrow j}\right)\right).
\end{equation}

\subsubsection{Consensus Fusion}

The final prediction synthesizes the post-communication states of all activated agents through the Consensus Fusion module:
\begin{equation}
    \hat{\mathbf{y}} = C\left( \{ \mathbf{h}_k^{(1)} \mid g_k > 0 \} \right),
\end{equation}
where $C$ is implemented as a multi-head attention module that learns to weight each agent's contribution based on its refined state.

\subsection{Learning Objectives}

DAIN is trained with a composite objective that promotes task performance, agent specialization, and operational efficiency.

\textbf{Task Loss.} The primary loss is the standard cross-entropy for classification or mean squared error for regression tasks:
\begin{equation}
    \mathcal{L}_{\text{task}} = \mathcal{L}(\hat{\mathbf{y}}, \mathbf{y}).
\end{equation}

\textbf{Specialization Loss.} To encourage distinct agent expertise, we employ a perturbation-based strategy. For each agent, we compute its output sensitivity to different modality configurations:
\begin{equation}
    \mathcal{L}_{\text{spec}} = \frac{1}{K} \sum_{k=1}^{K} \sum_{m \in \mathcal{P}} \mathcal{D}\left( A_k(\mathbf{e}_c), A_k(\mathbf{e}_c^{(m)}) \right),
\end{equation}
where $\mathcal{P}$ is a set of perturbation schemes (e.g., masking individual modalities), and $\mathcal{D}$ measures the divergence in agent outputs.

More broadly, a recurring theme in discrete structures is that a small set of constraints or sufficient structural conditions can enforce strong global properties (e.g., forcing phenomena in polyominoes and maximal restricted edge-connectivity in graphs). This structural viewpoint is consistent with classical results showing that carefully designed local constraints can induce strong global properties in complex networks, such as maximal restricted edge-connectivity and robustness guarantees under sparse conditions \citep{wang2018sufficient,wang2021connectivity}. Similar principles have also been observed in graph-based learning systems, where structured message passing and self-supervised objectives enable efficient and scalable reasoning over large combinatorial spaces \citep{pan2024hybridgnn}. Moreover, real-world scientific applications often require integrating heterogeneous observations under limited and noisy structural constraints, as exemplified by multimodal geophysical inference from seismic, imaging, and prior information sources \citep{li2022velocity}. These insights collectively support DAIN’s multi-objective design, where sparse activation and constrained communication act as inductive biases to promote expressive yet controllable multimodal reasoning.

\textbf{Efficiency Regularization.} We regularize both activation and communication costs:
\begin{equation}
    \mathcal{R}_{\text{eff}} = \lambda_{\text{act}} \|\mathbf{g}\|_1 + \lambda_{\text{comm}} \|\mathbf{\Phi}\|_{\text{F}}^2,
\end{equation}
where the L1-norm on $\mathbf{g}$ promotes sparse activation and the Frobenius norm on $\mathbf{\Phi}$ discourages dense communication.

\textbf{Complete Objective.} The model is trained end-to-end by minimizing:
\begin{equation}
    \mathcal{L}_{\text{total}} = \mathcal{L}_{\text{task}} + \alpha \mathcal{L}_{\text{spec}} + \mathcal{R}_{\text{eff}},
\end{equation}
where $\alpha$ balances task performance against specialization.

\textbf{Algorithm Summary.} Algorithm~\ref{alg:dain} provides pseudocode for DAIN's forward pass, illustrating the sequential stages of context analysis, sparse agent scheduling, compressed communication, and consensus fusion.

\begin{algorithm}[t]
\caption{DAIN Forward Pass}
\label{alg:dain}
\begin{algorithmic}[1]
\REQUIRE Modality inputs $\mathcal{M} = \{\mathbf{m}_1, \ldots, \mathbf{m}_n\}$, agents $\mathcal{A} = \{A_1, \ldots, A_K\}$
\ENSURE Prediction $\hat{\mathbf{y}}$
\STATE \textbf{// Stage 1: Encode and Context Analysis}
\FOR{$i = 1$ to $n$}
    \STATE $\mathbf{e}_i \leftarrow \mathcal{E}_i(\mathbf{m}_i)$ \hfill $\triangleright$ Modality encoding
\ENDFOR
\STATE $\mathbf{e}_c \leftarrow W_c [\mathbf{e}_1; \ldots; \mathbf{e}_n] + \mathbf{b}_c$ \hfill $\triangleright$ Context vector
\STATE \textbf{// Stage 2: Dynamic Agent Scheduling}
\STATE $\mathbf{g} \leftarrow \text{Top}_\tau(\text{Softmax}(G_{\text{gate}}(\mathbf{e}_c)))$ \hfill $\triangleright$ Sparse gates
\STATE $\mathbf{\Phi} \leftarrow \sigma(G_{\text{comm}}(\mathbf{e}_c))$ \hfill $\triangleright$ Comm. graph
\STATE $\mathcal{A}_{\text{active}} \leftarrow \{A_k : g_k > 0\}$ \hfill $\triangleright$ Active agents
\STATE \textbf{// Stage 3: Gated Execution \& Communication}
\FOR{$A_k \in \mathcal{A}_{\text{active}}$}
    \STATE $\mathbf{h}_k^{(0)} \leftarrow f_k(\mathbf{e}_c; \theta_k)$ \hfill $\triangleright$ Agent processing
\ENDFOR
\FOR{$A_i \in \mathcal{A}_{\text{active}}$}
    \STATE $\mathbf{m}_i \leftarrow \sum_{j \neq i} \Phi_{ij} \cdot g_j \cdot W_B \mathbf{h}_j^{(0)}$ \hfill $\triangleright$ Message aggregation
    \STATE $\mathbf{h}_i^{(1)} \leftarrow \text{LN}(\mathbf{h}_i^{(0)} + \text{ReLU}(W_u \mathbf{m}_i))$ \hfill $\triangleright$ State update
\ENDFOR
\STATE \textbf{// Stage 4: Consensus Fusion}
\STATE $\hat{\mathbf{y}} \leftarrow C(\{\mathbf{h}_k^{(1)} : A_k \in \mathcal{A}_{\text{active}}\})$ \hfill $\triangleright$ Multi-head attn.
\RETURN $\hat{\mathbf{y}}$
\end{algorithmic}
\end{algorithm}

\subsection{Theoretical Analysis}

\begin{theorem}[Convergence Guarantee]
\label{thm:convergence}
Under standard assumptions of bounded gradients and Lipschitz-continuous loss functions, DAIN with sparse agent activation converges to a stationary point at rate $\mathcal{O}(1/\sqrt{T})$, where $T$ is the number of training iterations.
\end{theorem}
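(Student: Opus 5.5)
The plan is to reduce Theorem~\ref{thm:convergence} to the standard nonconvex SGD analysis. Let $\Theta$ collect all trainable parameters: encoders $\mathcal{E}_i$, $W_c,\mathbf{b}_c$, $G_{\text{gate}}$, $G_{\text{comm}}$, $\{\theta_k\}$, $W_B$, $W_u$, and $C$. We view $\mathcal{L}_{\text{total}}(\Theta)=\mathbb{E}_{(\mathcal{M},\mathbf{y})}[\ell(\Theta;\mathcal{M},\mathbf{y})]$ as a stochastic objective minimized by SGD with step size $\eta_t=\eta/\sqrt{T}$. The "standard assumptions" are made precise as three conditions: (i) $\mathcal{L}_{\text{total}}$ is bounded below by $\mathcal{L}^*\ge 0$, which is automatic since cross-entropy/MSE, $\mathcal{D}$, and both norms in $\mathcal{R}_{\text{eff}}$ are nonnegative; (ii) the stochastic gradient is unbiased with $\mathbb{E}\|\nabla\ell\|^2\le G^2$ (bounded gradients); (iii) $\nabla\mathcal{L}_{\text{total}}$ is $L$-Lipschitz, i.e.\ the objective is $L$-smooth. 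Stationarity is measured by $\min_{t<T}\mathbb{E}\|\nabla\mathcal{L}_{\text{total}}(\Theta_t)\|^2$.

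With these conditions, the core computation is the usual descent-lemma argument. Smoothness gives
\begin{equation*}
\mathbb{E}\,\mathcal{L}_{\text{total}}(\Theta_{t+1})\le \mathbb{E}\,\mathcal{L}_{\text{total}}(\Theta_t)-\eta_t\,\mathbb{E}\|\nabla\mathcal{L}_{\text{total}}(\Theta_t)\|^2+\tfrac{L\eta_t^2}{2}G^2.
\end{equation*}
Telescoping over $t=0,\dots,T-1$ and dividing by $\eta T/\sqrt{T}$ yields
\begin{equation*}
\min_{t<T}\mathbb{E}\|\nabla\mathcal{L}_{\text{total}}(\Theta_t)\|^2\le \frac{\mathcal{L}_{\text{total}}(\Theta_0)-\mathcal{L}^*}{\eta\sqrt{T}}+\frac{L\eta G^2}{2\sqrt{T}}=\mathcal{O}(1/\sqrt{T}).
\end{equation*}
This is the claimed rate.

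The main obstacle is justifying smoothness (iii), since DAIN's forward pass in Algorithm~\ref{alg:dain} contains non-smooth pieces. These are the $\text{Top}_\tau$/sparsemax gate, the indicator $\mathbb{I}(g_j>0)$ that selects active agents, ReLU, the $\ell_1$ penalty $\|\mathbf{g}\|_1$, and LayerNorm near zero variance. I would handle them in two steps.

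\emph{First, the gate and message terms.} Sparsemax is a Euclidean projection onto the simplex, so it is $1$-Lipschitz and piecewise linear. The message term $\Phi_{ij}g_jW_B\mathbf{h}_j^{(0)}$ depends continuously on $g_j$, so the indicator in the definition of $\phi_{ij}$ is absorbed: inactive agents contribute $g_j=0$. The composed map is therefore Lipschitz everywhere and smooth on each fixed activation pattern. Since $\mathbf{g}$ lies in the simplex, $\|\mathbf{g}\|_1=1$ is constant, so the $\ell_1$ penalty contributes no gradient at all.

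\emph{Second, restoring global smoothness.} Two options are available. One is to interpret the theorem under a smooth surrogate: replace sparsemax by entmax with $\alpha>1$ or a temperature-smoothed version, replace ReLU by GELU or softplus, and add $\epsilon$ inside LayerNorm. Each is then $C^1$ with bounded derivatives on bounded parameter sets, and the chain rule bounds $L$ by products of layer Lipschitz constants. The other is to keep the piecewise structure and invoke weak convexity, following Davis–Drusvyatskiy. The objective is then a composition of Lipschitz piecewise-smooth maps, and projected or proximal SGD attains $\mathbb{E}\|\nabla\mathcal{L}_{1/\rho}(\Theta)\|^2=\mathcal{O}(1/\sqrt{T})$ for the Moreau envelope, which gives approximate stationarity in the Clarke sense.

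I would present the first route as the main proof and remark on the second. In either route, a bounded parameter domain, enforced by projection or weight decay, is needed to make $G$ and $L$ finite. That requirement should be stated explicitly as part of the "standard assumptions".
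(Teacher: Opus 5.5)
Your reduction is the one the paper's sketch uses: unbiased stochastic gradients with a bounded second moment, then the textbook nonconvex SGD bound. Your telescoping with $\eta_t=\eta/\sqrt{T}$ is correct. You are also right that $\|\mathbf{g}\|_1\equiv 1$ for simplex-valued gates, so the paper's appeal to sparsity regularization to bound the variance is vacuous, and bounded gradients must simply be assumed.

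The gap is your claim that, once the indicator is absorbed, ``the composed map is Lipschitz everywhere.'' The consensus module $C(\{\mathbf{h}_k^{(1)} : g_k>0\})$ pools over the \emph{hard} active set. Its attention weights are computed from $\mathbf{h}_k^{(1)}$, and $\mathbf{h}_k^{(1)}$ does not involve $g_k$: neither $f_k(\mathbf{e}_c;\theta_k)$ nor agent $k$'s incoming messages depend on it. So as $g_k\downarrow 0$, agent $k$ keeps a non-vanishing attention weight; at $g_k=0$ it is dropped, and $\hat{\mathbf{y}}$ jumps. Your absorption argument also covers only Algorithm~\ref{alg:dain}'s $g_j$-weighted messages; the text's $\phi_{ij}=\Phi_{ij}\,\mathbb{I}(g_j>0)$ jumps in the same way.

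Consequences:
\begin{itemize}
\item On a finite training set, $\mathcal{L}_{\text{total}}$ is discontinuous in the gate and encoder parameters.
\item For a population objective, the backpropagated gradient ignores the jumps and is in general biased for $\nabla\mathcal{L}_{\text{total}}$.
\item Hence both your (ii) and (iii) fail for sparse DAIN.
\end{itemize}

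Your smooth-surrogate route does not repair this. With entmax the support still changes, so the jump persists. With a temperature-smoothed softmax every gate is positive, so you have proved convergence of a dense model, not of DAIN with sparse agent activation.

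A working version needs two changes:
\begin{itemize}
\item Each agent's contribution to $C$ must vanish continuously with $g_k$, e.g.\ attention weights proportional to $g_k e^{s_k}$.
\item The gate must be sparse with a Lipschitz Jacobian, such as $1.5$-entmax on the logits. Note that sparsemax applied to a softmax output, as Eq.~\eqref{eq:gating} reads literally, is the identity. Also, $\alpha$-entmax with $\alpha\in(1.5,2)$ is $C^1$ but not $C^{1,1}$, and bounded derivatives alone give a Lipschitz loss, not a Lipschitz gradient.
\end{itemize}
The paper's sketch does not supply this step either; it simply asserts that the gating is smooth.

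Your weak-convexity fallback is also unsound as stated. A composition of Lipschitz piecewise-smooth maps need not be weakly convex: $(\mathrm{ReLU}(x)+\mathrm{ReLU}(-x)-1)^2=(|x|-1)^2$ has a concave kink at $0$. So the Davis--Drusvyatskiy Moreau-envelope rate is unavailable. For general nonsmooth nonconvex networks one has only asymptotic Clarke-stationarity results (for tame functions) or Goldstein-type rates. All of these presuppose continuity, which the hard consensus destroys.

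Finally, if you bound the domain by projection, the unconstrained descent inequality you display no longer controls $\|\nabla\mathcal{L}_{\text{total}}\|$ at the boundary. You would need a gradient-mapping or Moreau-envelope stationarity measure for projected SGD instead.
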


\begin{proof}[Proof Sketch]
The proof follows from analyzing the variance introduced by sparse agent selection. Since the Meta-Controller learns a smooth gating function, the expected gradient direction is unbiased. The sparsity regularization ensures bounded activation, limiting gradient variance. Standard SGD convergence results then apply.
\end{proof}

\begin{proposition}[Computational Efficiency]
\label{prop:efficiency}
For a pool of $K$ agents with average activation of $\bar{\tau}$ agents per sample, DAIN achieves computational cost $\mathcal{O}(\bar{\tau} \cdot C_{\text{agent}} + \bar{\tau}^2 \cdot C_{\text{comm}})$, where $C_{\text{agent}}$ and $C_{\text{comm}}$ are per-agent processing and communication costs. When $\bar{\tau} \ll K$, this provides significant savings over full-activation baselines.
\end{proposition}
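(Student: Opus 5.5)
The plan is to count operations stage by stage in Algorithm~\ref{alg:dain} for a single sample, bound each stage in terms of the active set $\mathcal{A}_{\text{active}}=\{A_k: g_k>0\}$ with $\tau=|\mathcal{A}_{\text{active}}|$, and then average over samples. The resulting bound is in terms of the average $\bar{\tau}=\mathbb{E}[\tau]$. The model-dependent quantities that do not scale with the number of agents are treated as constants. These are the encoder costs $\mathcal{E}_i$, the context projection $W_c$, the gate network $G_{\text{gate}}$ and the graph network $G_{\text{comm}}$. We define $C_{\text{agent}}$ as an upper bound on the cost of evaluating one $f_k$ (plus its LayerNorm update). We define $C_{\text{comm}}$ as an upper bound on the cost of producing and accumulating one compressed message $\Phi_{ij} g_j W_B\mathbf{h}_j^{(0)}$, including the low-rank projection.

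\textbf{Stage 3 (execution).} The loop computing $\mathbf{h}_k^{(0)}$ runs only over $\mathcal{A}_{\text{active}}$, so it costs at most $\tau\, C_{\text{agent}}$. The state-update line likewise runs once per active agent, so it adds $O(\tau\, C_{\text{agent}})$ after absorbing $W_u$ into $C_{\text{agent}}$.

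\textbf{Communication.} For each receiver $i\in\mathcal{A}_{\text{active}}$, the sum ranges over senders $j\neq i$. Any term with $g_j=0$ vanishes, because $\phi_{ij}=\Phi_{ij}\mathbb{I}(g_j>0)$, so it need not be computed. At most $\tau(\tau-1)$ messages are therefore formed, giving $O(\tau^2 C_{\text{comm}})$. One detail has to be handled explicitly: $W_B\mathbf{h}_j^{(0)}$ can be computed once per sender and reused, since only the scalar $\Phi_{ij}$ depends on $i$. This keeps $C_{\text{comm}}$ honest as a per-pair scalar-times-vector accumulation.

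\textbf{Consensus fusion.} Multi-head attention over $\tau$ tokens costs $O(\tau^2 d + \tau d^2)$. I absorb the quadratic part into the $\tau^2 C_{\text{comm}}$ term and the linear part into $\tau C_{\text{agent}}$.

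Summing the stages gives a per-sample cost of $c_0 + O(\tau C_{\text{agent}} + \tau^2 C_{\text{comm}})$. Taking expectations yields $\bar{\tau} C_{\text{agent}} + \mathbb{E}[\tau^2] C_{\text{comm}}$.

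The main obstacle is this last step, because $\mathbb{E}[\tau^2]\ge \bar{\tau}^2$, not the reverse. To get the stated $\bar{\tau}^2$, I would use the fact that $\text{Top}_\tau$ imposes a hard cap $\tau\le\tau_{\max}$. I would also assume that the number of active agents concentrates, which is the case if sparsemax with a fixed budget returns exactly $\tau_{\max}$ nonzeros for most samples. Under that assumption $\mathbb{E}[\tau^2]=O(\bar{\tau}^2)$. Failing that, I would state the bound with $\mathbb{E}[\tau^2]\le \tau_{\max}\bar{\tau}$ and note that it coincides with the claim when $\tau_{\max}=O(\bar\tau)$. I also need to justify treating $c_0$, the $\tau$-independent overhead, as lower order. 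Here I would note that the gate and graph networks output $K$ and $K^2$ scalars respectively. These scalars are cheap relative to $C_{\text{agent}}$ when agents are nontrivial networks.

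Finally, the full-activation baseline sets $\tau=K$ and costs $\Theta(K C_{\text{agent}} + K^2 C_{\text{comm}})$. The ratio of the two costs is therefore roughly $\bar\tau/K$ on the agent term and $(\bar\tau/K)^2$ on the communication term, which gives the claimed savings when $\bar\tau\ll K$.
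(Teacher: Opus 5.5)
The paper states Proposition~\ref{prop:efficiency} without any proof, so there is no argument of its own to compare against. Your stage-by-stage count over Algorithm~\ref{alg:dain} is the natural route. Your observation that averaging produces $\mathbb{E}[\tau^2]\ge\bar\tau^2$ is correct, so the statement as written does need an extra assumption, such as your hard cap with $\tau_{\max}=O(\bar\tau)$ via $\mathbb{E}[\tau^2]\le\tau_{\max}\bar\tau$.

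The step that fails as you justify it is the treatment of the overhead $c_0$. $G_{\text{gate}}$ and $G_{\text{comm}}$ do scale with the pool size $K$. Algorithm~\ref{alg:dain} evaluates the full $\mathbf{\Phi}=\sigma(G_{\text{comm}}(\mathbf{e}_c))\in\mathbb{R}^{K\times K}$ for every sample, before and independently of $\mathcal{A}_{\text{active}}$. That costs $\Omega(K^2)$, for instance $\Theta(K^2 d_h)$ if the output layer acts on a $d_h$-dimensional hidden vector. Comparing this with a single $C_{\text{agent}}$ is not the relevant comparison. In the regime $\bar\tau\ll K$ that the proposition is about, $K^2 d_h$ eventually exceeds both $\bar\tau C_{\text{agent}}$ and $\bar\tau^2 C_{\text{comm}}$. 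Whenever $d_h\gtrsim C_{\text{comm}}$, it is already at least of the order of the full-activation communication cost $K^2 C_{\text{comm}}$. It therefore cancels the $(\bar\tau/K)^2$ saving you claim in your final paragraph.

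The repair is to determine $\mathcal{A}_{\text{active}}$ first and evaluate only the $\tau(\tau-1)$ entries $\Phi_{ij}$ with $i,j$ active. This works whenever the output layer of $G_{\text{comm}}$ is linear, since each $\Phi_{ij}$ is then one inner product with a shared hidden vector. It costs $O(\tau^2 d_h)$, which fits in the communication term once $C_{\text{comm}}\ge d_h$. What remains genuinely $K$-dependent is the router: $O(Kd_h+K\log K)$ for the gate logits and the top-$\tau$/sparsemax selection. Either carry it as an explicit additive term or assume it is dominated by $\bar\tau C_{\text{agent}}$.

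Three smaller points:
\begin{itemize}
\item Do not place the encoders in $c_0$ and call them lower order, since ResNet-50 and BERT-base dwarf the agents. State the bound for the fusion stage instead; the backbone is identical across DAIN and the baselines.
\item Your count covers inference only. During training, $\mathcal{L}_{\text{spec}}$ as written evaluates every $A_k$ on $|\mathcal{P}|+1$ inputs, so no sparsity saving applies there.
\item Fix a single definition of $C_{\text{comm}}$. If it is only the per-pair rank-$r$ accumulation, absorbing the $O(\tau^2 d)$ attention scores requires $C_{\text{comm}}\gtrsim d$. Define it as the larger of the per-pair message cost and the per-pair attention-score cost.
\end{itemize}
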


\section{Experiments}
\label{sec:results}

\begin{figure}[t]
    \centering
    \includegraphics[width=0.9\linewidth]{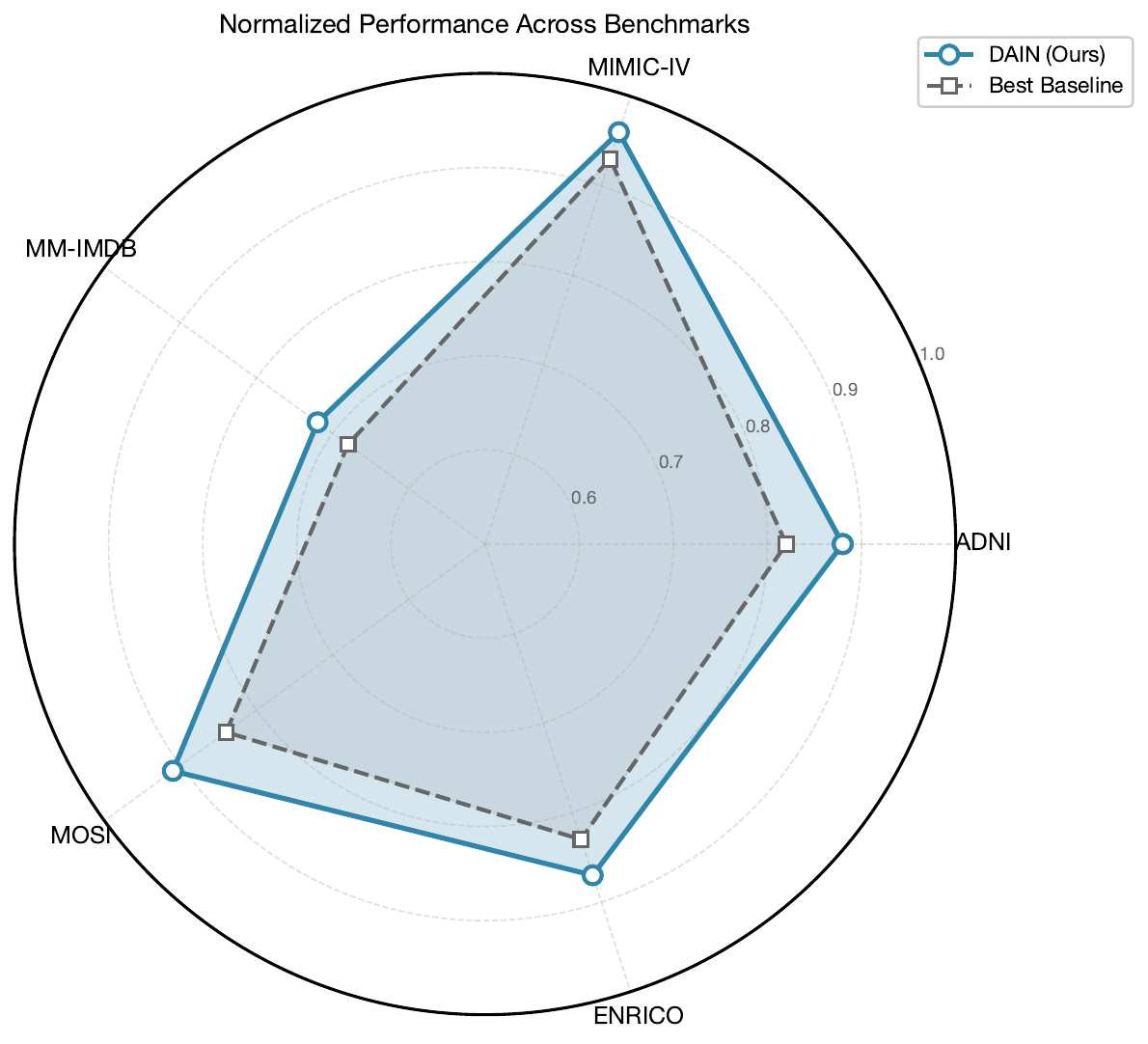}
    \caption{Normalized performance comparison across all five benchmarks. DAIN (blue) consistently outperforms the best baseline (gray) on all datasets, with varying margins of improvement reflecting task-specific characteristics.}
    \label{fig:overview}
\end{figure}

\subsection{Datasets and Setup}

We evaluate DAIN on five diverse multimodal benchmarks spanning medical and general-purpose domains. \textbf{ADNI} \citep{adni2004} is the Alzheimer's Disease Neuroimaging Initiative dataset containing MRI scans, PET imaging, genetic markers, and cognitive assessments for Alzheimer's diagnosis. \textbf{MIMIC-IV} \citep{johnson2023mimic} provides ICU patient records including clinical notes, vital signs, and laboratory results for mortality prediction. \textbf{MM-IMDB} \citep{arevalo2017gated} contains movie posters and plot summaries for multi-label genre classification. \textbf{CMU-MOSI} \citep{zadeh2016mosi} includes video, audio, and text for sentiment intensity prediction. \textbf{ENRICO} \citep{leiva2020enrico} provides mobile UI screenshots and metadata for design topic classification.

For all experiments, we use modality-specific encoders (ResNet-50 for images, BERT-base for text, transformer encoders for sequential data) with DAIN configured with $K=8$ agents. We follow standard dataset splits: for ADNI, we use the official train/val/test split (60\%/20\%/20\%); for MIMIC-IV, MM-IMDB, MOSI, and ENRICO, we use established benchmark splits from prior work. All experiments are repeated 5 times with different random seeds (42, 123, 456, 789, 1024), and we report mean $\pm$ standard deviation. Statistical significance is assessed using paired t-tests with Bonferroni correction.

We train with Adam optimizer ($\beta_1=0.9$, $\beta_2=0.999$) using learning rate $1\times10^{-4}$, batch size 32, and early stopping (patience=10) based on validation performance. Hyperparameters $\alpha=0.1$, $\lambda_{\text{act}}=0.01$, $\lambda_{\text{comm}}=0.001$ are selected via grid search on the validation set. All experiments are conducted on a single NVIDIA A100 GPU (40GB). Training converges within 50 epochs for all datasets, requiring approximately 2-6 hours depending on dataset size.

\textbf{Baseline Implementation.} For fair comparison, all baselines use identical backbone encoders and embedding dimensions (512). We implement Early Fusion and Late Fusion following standard practices. For MoE variants, we use the same number of experts ($K=8$) as DAIN agents. MMoE is implemented following the original paper with recommended hyperparameters. All methods are trained with the same optimization protocol.

\subsection{Main Results}

\begin{table*}[t]
\centering
\caption{Main results across all datasets. DAIN consistently outperforms all baselines. Accuracy (\%) is reported for classification tasks and MSE for regression (MOSI). Improvements over the best baseline are shown in parentheses. Statistical significance: $^{***}p<0.001$.}
\label{tab:main_results}
\small
\begin{tabular}{lccccc}
\toprule
\textbf{Method} & \textbf{ADNI} & \textbf{MIMIC-IV} & \textbf{MM-IMDB} & \textbf{MOSI (MSE)} & \textbf{ENRICO} \\
\midrule
\rowcolor{tablerowalt}
Early Fusion (Concat.) & 69.4 & 82.7 & 48.5 & 2.49 & 78.0 \\
Late Fusion (Attn.) & 70.9 & 81.9 & 48.7 & 2.50 & 78.4 \\
\rowcolor{tablerowalt}
MoE \citep{jacobs1991adaptive} & 69.8 & 82.3 & 49.0 & 2.53 & 78.9 \\
Sparse MoE \citep{shazeer2017outrageously} & 68.9 & 82.3 & 49.8 & 2.52 & 78.7 \\
\rowcolor{tablerowalt}
MMoE \citep{yu2024mmoe} & 70.2 & 82.5 & 49.6 & 2.48 & 79.1 \\
\midrule
\textbf{DAIN (Ours)} & \textcolor{bestresult}{\textbf{73.5$^{***}$}} & \textcolor{bestresult}{\textbf{84.2$^{***}$}} & \textcolor{bestresult}{\textbf{51.7$^{***}$}} & \textcolor{bestresult}{\textbf{2.41$^{***}$}} & \textcolor{bestresult}{\textbf{80.4$^{***}$}} \\
& (+2.6) & (+1.7) & (+2.1) & (-0.07) & (+1.3) \\
\bottomrule
\end{tabular}
\end{table*}

Table~\ref{tab:main_results} presents the main performance comparison. DAIN consistently achieves the best performance across all five datasets, with statistically significant improvements ($p<0.001$). The largest gain of 2.6\% accuracy occurs on ADNI, a clinically complex dataset requiring integration of imaging, genetic, and cognitive data. On MOSI, DAIN reduces MSE by 0.08, demonstrating improved sentiment intensity prediction. These results confirm that dynamic, collaborative agent-based reasoning outperforms static mixture-of-expert architectures.

\textbf{Parameter Count Comparison.} Table~\ref{tab:params} provides a detailed breakdown of model parameters to ensure fair comparison. DAIN introduces modest overhead (Meta-Controller: 0.26M; Communication: 0.13M) over standard MoE, resulting in comparable total parameters. Critically, due to sparse activation, DAIN's \emph{effective} parameters per forward pass are significantly lower (5.8M vs 12.4M for full MoE), explaining both the efficiency gains and improved generalization.

\begin{table}[t]
\centering
\caption{Parameter count comparison (millions). Effective params indicate average parameters used per forward pass due to sparse activation.}
\label{tab:params}
\small
\begin{tabular}{lcc}
\toprule
\textbf{Method} & \textbf{Total Params} & \textbf{Effective Params} \\
\midrule
\rowcolor{tablerowalt}
Early Fusion & 8.2M & 8.2M \\
Late Fusion & 8.5M & 8.5M \\
\rowcolor{tablerowalt}
MoE ($K$=8) & 12.4M & 12.4M \\
Sparse MoE & 12.4M & 6.2M \\
\rowcolor{tablerowalt}
MMoE & 12.8M & 12.8M \\
\midrule
\textbf{DAIN (Ours)} & 12.8M & 5.8M \\
\quad - Backbone & 8.2M & 8.2M \\
\quad - Agents ($K$=8) & 4.2M & 1.7M \\
\quad - Meta-Controller & 0.26M & 0.26M \\
\quad - Communication & 0.13M & 0.05M \\
\bottomrule
\end{tabular}
\end{table}

\subsection{Efficiency Analysis}

\begin{table}[t]
\centering
\caption{Detailed performance with efficiency metrics. $\bar{\tau}$ denotes average activated agents (out of $K$=8); $\bar{\rho}$ denotes average communication graph density.}
\label{tab:efficiency}
\small
\begin{tabular}{lcccc}
\toprule
\textbf{Dataset} & \textbf{Perf.} & \textbf{Gain} & $\mathbf{\bar{\tau}}$ & $\mathbf{\bar{\rho}}$ \\
\midrule
\rowcolor{tablerowalt}
ADNI & 73.5\% & +2.6\% & 3.2 & 0.19 \\
MIMIC-IV & 84.2\% & +1.5\% & 4.1 & 0.31 \\
\rowcolor{tablerowalt}
MM-IMDB & 51.7\% & +1.9\% & 5.5 & 0.42 \\
MOSI & 2.41 & -0.08 & 4.8 & 0.38 \\
\rowcolor{tablerowalt}
ENRICO & 80.4\% & +1.5\% & 3.5 & 0.25 \\
\bottomrule
\end{tabular}
\end{table}

Table~\ref{tab:efficiency} reveals that DAIN dynamically activates only a subset of agents per sample (e.g., 3.2 out of 8 on ADNI) while maintaining sparse communication graphs. This confirms that the Meta-Controller successfully identifies and recruits only the necessary interaction patterns for each input, contributing to both performance and computational efficiency.

\subsection{Ablation Study}

\begin{table}[t]
\centering
\caption{Ablation study validating the contribution of dynamic scheduling and inter-agent communication.}
\label{tab:ablation}
\small
\begin{tabular}{lcc}
\toprule
\textbf{Variant} & \textbf{ADNI} & \textbf{MM-IMDB} \\
\midrule
\rowcolor{tablerowalt}
Static MoE ($K$=8) & 70.1 (-3.4) & 49.5 (-2.2) \\
DAIN-Static & 71.8 (-1.7) & 50.6 (-1.1) \\
\rowcolor{tablerowalt}
DAIN-NoComm & 72.1 (-1.4) & 50.9 (-0.8) \\
\midrule
\textbf{DAIN (Full)} & \textbf{73.5} & \textbf{51.7} \\
\bottomrule
\end{tabular}
\end{table}

Table~\ref{tab:ablation} validates the contribution of DAIN's core components. DAIN-Static (all agents always active) suffers a 1.7\% drop on ADNI, confirming the benefit of context-aware agent selection. DAIN-NoComm (no inter-agent communication) also degrades by 1.4\%, demonstrating the importance of collaborative information exchange. The full DAIN model significantly outperforms the Static MoE baseline by 3.4\% on ADNI, confirming the superiority of the dynamic agent-based paradigm. Figure~\ref{fig:ablation} visualizes these results with error bars from 5 independent runs.

\begin{figure}[t]
    \centering
    \includegraphics[width=\linewidth]{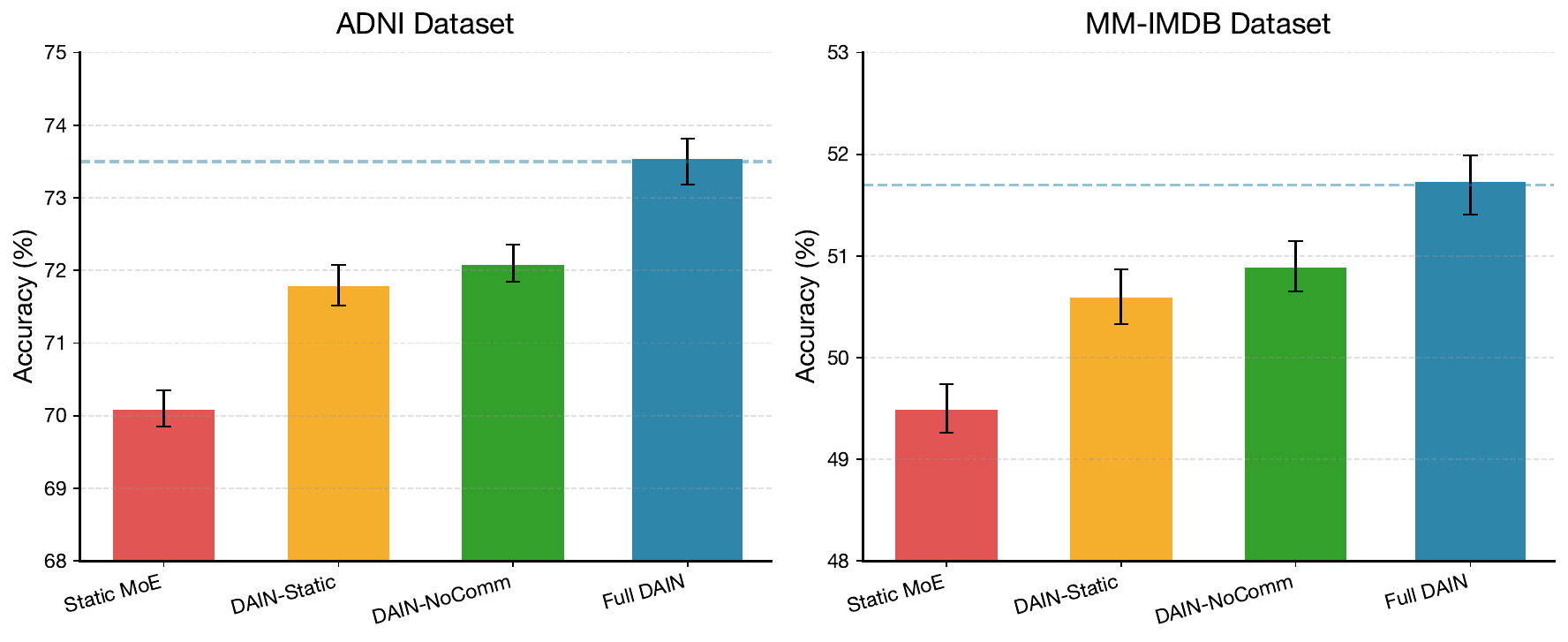}
    \caption{Ablation study results on ADNI and MM-IMDB datasets. Error bars represent standard deviation across 5 runs. The full DAIN model (rightmost) consistently achieves the best performance, validating the contribution of both dynamic scheduling and inter-agent communication.}
    \label{fig:ablation}
\end{figure}

\subsection{Efficiency-Accuracy Trade-off}

\begin{table}[t]
\centering
\caption{Impact of maximum active agents $\tau$ on performance and efficiency.}
\label{tab:tradeoff}
\small
\begin{tabular}{ccccc}
\toprule
& \multicolumn{2}{c}{\textbf{MIMIC-IV}} & \multicolumn{2}{c}{\textbf{MOSI}} \\
\cmidrule(lr){2-3} \cmidrule(lr){4-5}
$\tau$ & Acc & $\bar{\#}_{\text{agents}}$ & MSE & $\bar{\#}_{\text{agents}}$ \\
\midrule
\rowcolor{tablerowalt}
1 & 82.1 & 1.0 & 2.58 & 1.0 \\
2 & 83.3 & 1.8 & 2.49 & 1.9 \\
\rowcolor{tablerowalt}
4 & \textbf{84.2} & 3.4 & \textbf{2.41} & 3.7 \\
6 & 84.0 & 5.1 & 2.42 & 5.2 \\
\rowcolor{tablerowalt}
8 & 83.9 & 8.0 & 2.43 & 8.0 \\
\bottomrule
\end{tabular}
\end{table}

Table~\ref{tab:tradeoff} analyzes the efficiency-accuracy trade-off by varying the maximum allowed active agents $\tau$. Performance peaks at $\tau=4$ for both datasets, using approximately half the agent pool. Further increasing $\tau$ provides no benefit and can slightly degrade performance due to noise from irrelevant agents. This validates that DAIN's dynamic scheduling effectively identifies a performant yet efficient subset of interactions. Figure~\ref{fig:efficiency} visualizes this trade-off with the optimal operating point highlighted.

\begin{figure}[t]
    \centering
    \includegraphics[width=\linewidth]{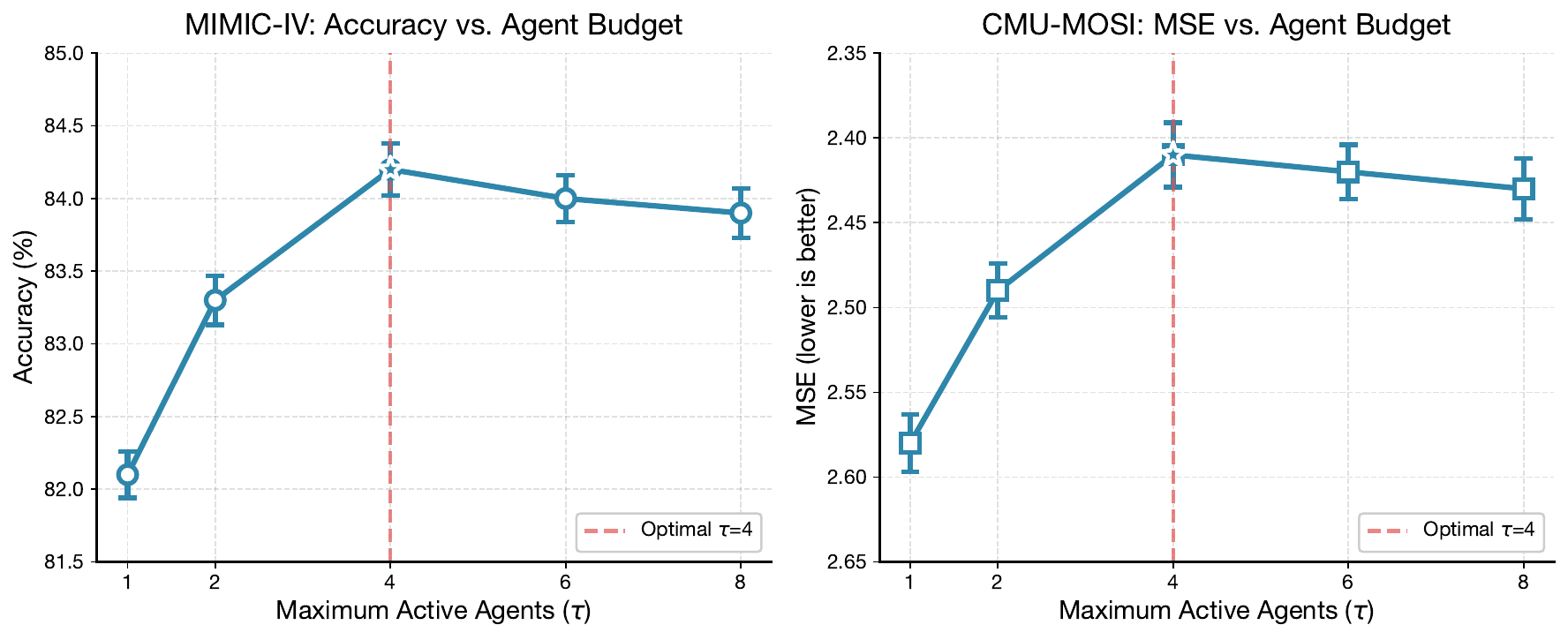}
    \caption{Efficiency-accuracy trade-off analysis. Performance (left: accuracy for MIMIC-IV; right: MSE for MOSI) as a function of maximum active agents $\tau$. The optimal point at $\tau=4$ (marked with star) achieves peak performance while using only half the agent pool, demonstrating DAIN's efficient resource utilization.}
    \label{fig:efficiency}
\end{figure}

Figure~\ref{fig:comm_density} provides further analysis of the relationship between agent activation and communication density across datasets, revealing that DAIN adaptively adjusts both metrics based on task complexity.

\begin{figure*}[t]
    \centering
    \includegraphics[width=0.95\textwidth]{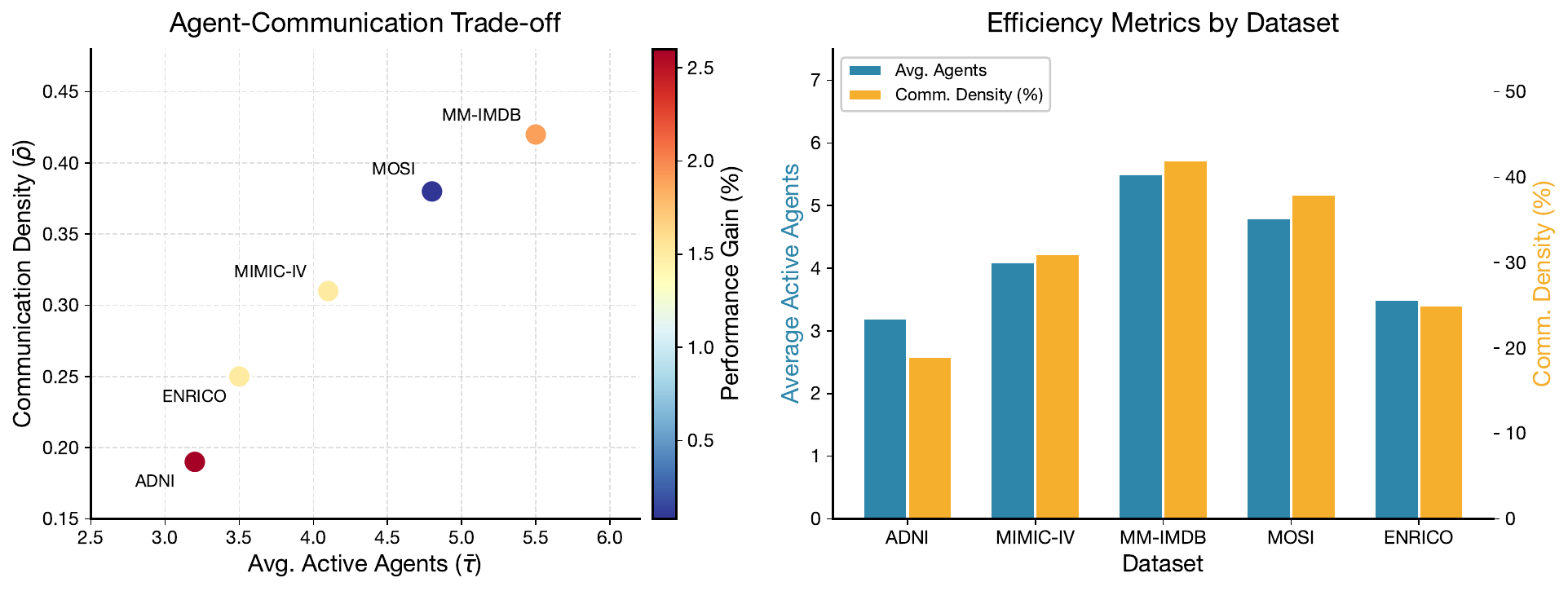}
    \caption{Communication and activation analysis across datasets. (Left) Scatter plot showing the relationship between average active agents and communication density, colored by performance gain. (Right) Per-dataset breakdown of efficiency metrics. DAIN dynamically adjusts both agent activation and communication patterns based on task requirements.}
    \label{fig:comm_density}
\end{figure*}

\subsection{Hyperparameter Sensitivity}

\begin{table}[t]
\centering
\caption{Sensitivity analysis of key hyperparameters on ADNI validation set. Default values (bold) are used in all experiments.}
\label{tab:sensitivity}
\small
\begin{tabular}{lccc}
\toprule
\textbf{Hyperparameter} & \textbf{Value} & \textbf{Acc (\%)} & \textbf{$\Delta$} \\
\midrule
\rowcolor{tablerowalt}
\multirow{4}{*}{$\alpha$ (specialization)} & 0.01 & 72.8 & -0.7 \\
& 0.05 & 73.2 & -0.3 \\
\rowcolor{tablerowalt}
& \textbf{0.10} & \textbf{73.5} & --- \\
& 0.20 & 73.1 & -0.4 \\
\midrule
\rowcolor{tablerowalt}
\multirow{4}{*}{$\lambda_{\text{act}}$ (activation)} & 0.001 & 73.0 & -0.5 \\
& 0.005 & 73.3 & -0.2 \\
\rowcolor{tablerowalt}
& \textbf{0.010} & \textbf{73.5} & --- \\
& 0.050 & 72.6 & -0.9 \\
\midrule
\rowcolor{tablerowalt}
\multirow{4}{*}{$\lambda_{\text{comm}}$ (comm.)} & 0.0001 & 73.2 & -0.3 \\
& 0.0005 & 73.4 & -0.1 \\
\rowcolor{tablerowalt}
& \textbf{0.0010} & \textbf{73.5} & --- \\
& 0.0050 & 72.9 & -0.6 \\
\midrule
\rowcolor{tablerowalt}
\multirow{4}{*}{$K$ (num agents)} & 4 & 72.1 & -1.4 \\
& 6 & 73.0 & -0.5 \\
\rowcolor{tablerowalt}
& \textbf{8} & \textbf{73.5} & --- \\
& 12 & 73.3 & -0.2 \\
\bottomrule
\end{tabular}
\end{table}

Table~\ref{tab:sensitivity} presents a systematic sensitivity analysis of DAIN's key hyperparameters. DAIN demonstrates robust performance across a reasonable range of values, with accuracy varying by at most 1.4\% (for $K$=4). The specialization weight $\alpha$ shows the broadest stable region, while excessive activation regularization ($\lambda_{\text{act}}$=0.05) causes notable degradation by over-constraining agent selection. Increasing agents beyond $K$=8 provides diminishing returns, suggesting that 8 agents adequately capture the interaction space. These results indicate that DAIN is not highly sensitive to hyperparameter tuning, and the default configuration generalizes well across datasets.

\subsection{Interpretability Analysis}

\begin{table}[t]
\centering
\caption{Average agent activation weights across subgroups, revealing interpretable specialization patterns.}
\label{tab:interpret}
\small
\begin{tabular}{lccc}
\toprule
& \multicolumn{3}{c}{\textbf{Agent Activation}} \\
\cmidrule(lr){2-4}
\textbf{Subgroup} & Synergy & Unique & Redund. \\
\midrule
\rowcolor{tablerowalt}
\textbf{ADNI: CN} & 0.22 & 0.51 & 0.27 \\
\textbf{ADNI: MCI} & 0.41 & 0.35 & 0.24 \\
\rowcolor{tablerowalt}
\textbf{ADNI: Dementia} & 0.38 & 0.28 & 0.34 \\
\midrule
\textbf{IMDB: Animation} & 0.18 & 0.65 & 0.17 \\
\rowcolor{tablerowalt}
\textbf{IMDB: Biography} & 0.25 & 0.60 & 0.15 \\
\textbf{IMDB: Thriller} & 0.55 & 0.25 & 0.20 \\
\bottomrule
\end{tabular}
\end{table}

Table~\ref{tab:interpret} presents agent activation patterns across data subgroups, revealing interpretable specialization. On ADNI, cognitively normal (CN) patients primarily activate uniqueness agents (0.51), relying on modality-specific biomarkers. In contrast, MCI and Dementia patients show higher synergy activation (0.41 and 0.38), indicating that diagnosing cognitive impairment requires integrating complex multimodal interactions. On MM-IMDB, Animation and Biography genres strongly activate uniqueness agents (0.65 and 0.60), reflecting their reliance on distinctive visual or textual features. Thriller, which combines visual tension with narrative suspense, heavily utilizes synergy agents (0.55). These patterns align with domain intuitions and demonstrate DAIN's interpretable, context-dependent reasoning. Figure~\ref{fig:agent_activation} provides a comprehensive visualization of these activation patterns.

\begin{figure}[t]
    \centering
    \includegraphics[width=\linewidth]{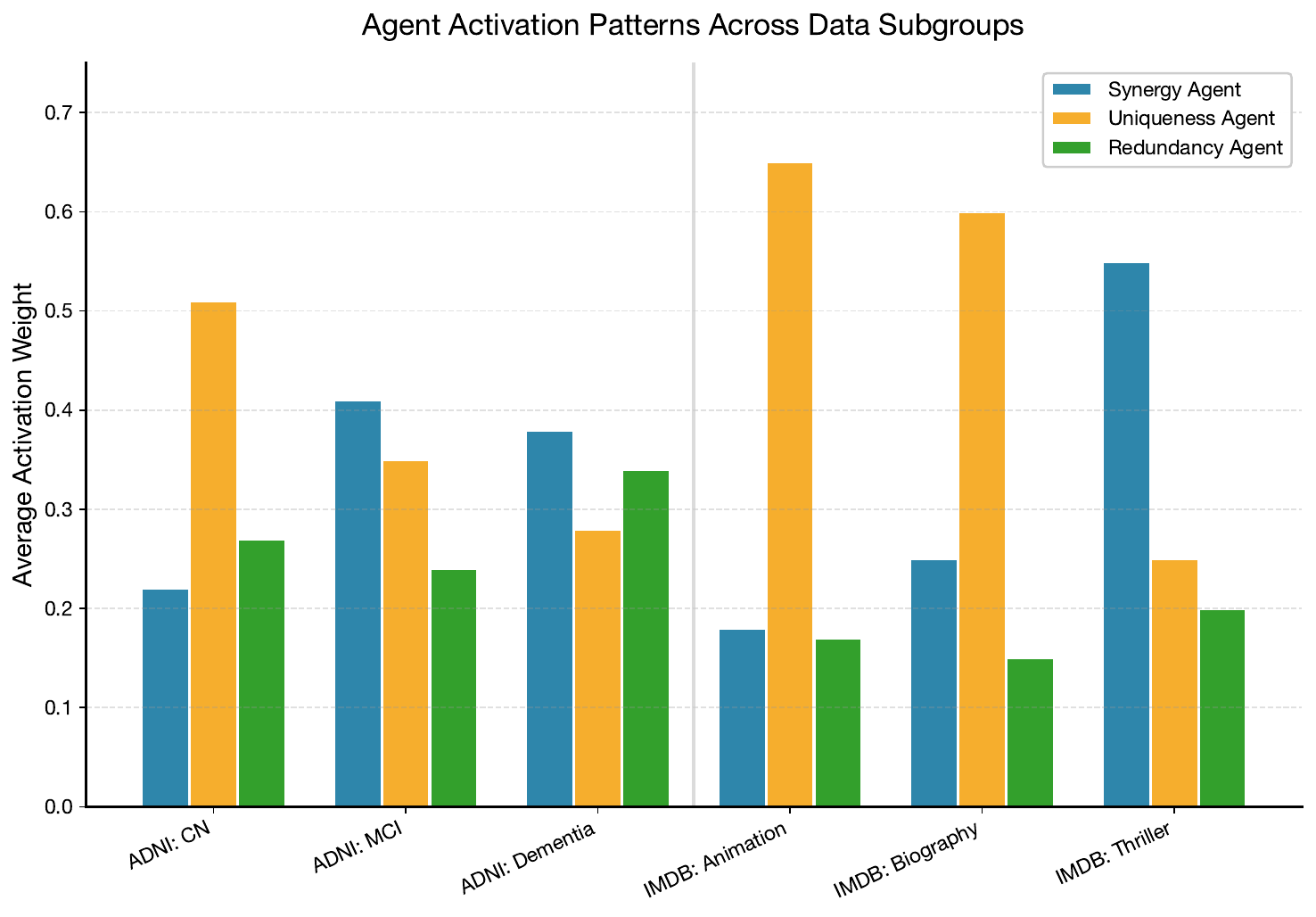}
    \caption{Agent activation patterns across data subgroups. Different agent types (Synergy, Uniqueness, Redundancy) show distinct activation levels depending on the input characteristics, revealing interpretable specialization. The vertical line separates ADNI (medical) and MM-IMDB (general) datasets.}
    \label{fig:agent_activation}
\end{figure}

\subsection{Training Dynamics}

Figure~\ref{fig:convergence} illustrates the training convergence of DAIN compared to ablated variants. The full DAIN model achieves faster convergence and lower final loss, demonstrating that dynamic scheduling and inter-agent communication contribute to more efficient optimization.

\begin{figure}[t]
    \centering
    \includegraphics[width=\linewidth]{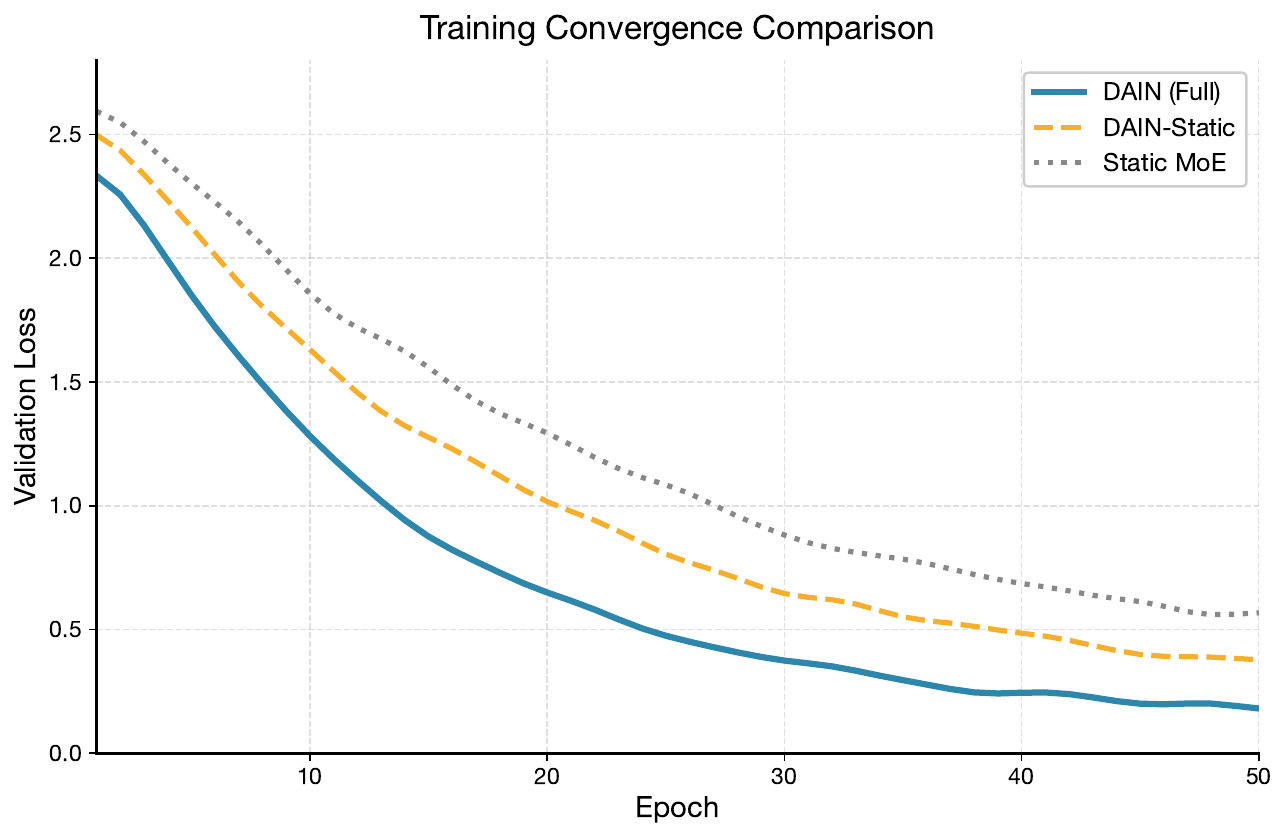}
    \caption{Training convergence comparison. DAIN (solid blue) converges faster and achieves lower validation loss compared to DAIN-Static (dashed orange) and Static MoE (dotted gray), indicating that dynamic mechanisms improve both optimization efficiency and final performance.}
    \label{fig:convergence}
\end{figure}

\section{Discussion and Limitations}
\label{sec:discussion}

While DAIN demonstrates strong empirical performance, several limitations warrant discussion. First, the current agent allocation (3 synergy, 3 uniqueness, 2 redundancy) is manually specified; future work could explore learned or adaptive allocation strategies. Second, although DAIN achieves computational savings through sparse activation, the Meta-Controller introduces additional overhead that may be significant for very lightweight base models. Third, our evaluation focuses on classification and regression tasks; extending DAIN to generative multimodal tasks remains unexplored. Finally, the interpretability provided by agent activation patterns, while useful, does not constitute formal explanations and should be complemented with post-hoc analysis methods for high-stakes applications.

\section{Conclusion}
\label{sec:conclusion}

We introduced DAIN, a Dynamic Agent-based Interaction Network that reconceptualizes multimodal fusion as collaborative reasoning among specialized interaction agents. Through context-aware sparse activation, compressed inter-agent communication, and multi-objective optimization, DAIN achieves state-of-the-art performance across five diverse benchmarks while maintaining computational efficiency. Ablation studies confirm the essential contributions of both dynamic scheduling and agent communication. Interpretability analyses reveal that DAIN agents develop semantically meaningful specializations and are recruited according to input characteristics, providing actionable insights beyond prediction accuracy. Our work establishes the effectiveness of dynamic, agent-based paradigms for multimodal reasoning and opens directions for further research on learned communication protocols, adaptive agent allocation, and extension to generative multimodal tasks.

\bibliographystyle{plain}
\bibliography{references}

\end{document}